\pdfoutput=1
\documentclass[11pt,a4paper]{article}
\usepackage[margin=1in]{geometry}
\usepackage{times}
\usepackage{microtype}

\usepackage{amsmath,amsfonts,bm}

\def\eqref#1{equation~\ref{#1}}

\def\1{\bm{1}}

\DeclareMathAlphabet{\mathsfit}{\encodingdefault}{\sfdefault}{m}{sl}
\SetMathAlphabet{\mathsfit}{bold}{\encodingdefault}{\sfdefault}{bx}{n}

\usepackage{hyperref}
\usepackage{url}
\usepackage{graphicx}
\usepackage{booktabs}
\usepackage{amsmath,amssymb,amsthm}
\usepackage{subcaption}
\usepackage{placeins}
\usepackage{flafter}
\usepackage{algorithm}
\usepackage{algpseudocode}
\usepackage{float}
\usepackage[numbers,sort&compress]{natbib}

\hypersetup{
  colorlinks=true,
  linkcolor=black,
  citecolor=black,
  urlcolor=blue,
  pdfborder={0 0 0}
}

\newtheorem{theorem}{Theorem}
\newtheorem{lemma}{Lemma}
\newtheorem{proposition}{Proposition}

\title{Multi-Horizon Consistency as Geometry:\\
When Latent Dynamics Contract, and When They Do Not}

\author{
Kavya Bhand \\
Department of Computer Engineering \\
Vishwakarma Institute of Technology, Pune, India \\
\texttt{kavya.bhand0806@gmail.com}
\and
Aadi Joshi \\
Department of Computer Engineering \\
Vishwakarma Institute of Technology, Pune, India \\
\texttt{toaadijoshi@gmail.com}
}

\begin{document}
\maketitle

\begin{abstract}
Multi-horizon latent consistency is a common training knob in video predictors and world models, but practitioners rarely know what it does to transition geometry.
We treat $\lambda$, the weight on multi-step latent agreement, as a diagnostic control and measure an empirical expansion proxy $L_{20,\mathrm{q95}}$ together with horizon-$20$ prediction error $E_{20}$.
On Moving-MNIST ($n{=}6$ seeds at the critical pair), raising $\lambda$ from $0$ to $0.8$ cuts $L_{20}$ from $4.96{\pm}2.01$ to $1.01{\pm}0.06$ (paired $t$ $p{=}0.005$, Wilcoxon $p{=}0.031$) and halves $E_{20}$ ($0.365{\rightarrow}0.177$, paired $t$ $p{=}1.1{\times}10^{-13}$).
Four of six seeds cross $L{<}1$ at $\lambda{=}0.8$.
The same loss does \emph{not} produce population $L{<}1$ on action-conditioned Pendulum-v1 or CartPole-v1, nor on KTH Actions video, even when $E_{20}$ improves.
An associational mediation analysis on MMNIST gives $\hat{r}{=}0.94$ (95\% CI $[0.88,1.00]$, $n{=}27$, $B{=}2000$); $\lambda$ was not randomized.
Defensive checks (architectural baselines, exogenous stress, WorldTest, MPC, scaling) mostly support a narrow claim: soft consistency can push passive video toward a near-contractive band, and that band is domain-limited.
A stochastic-forcing law $\hat L_{20}{\approx}1.23{+}1.82\eta$ at $\lambda{=}0.8$ (bootstrap slope CI $[1.73,1.92]$, $R^2{=}0.96$) unifies control domains on the same curve via calibrated $\eta_{\mathrm{eff}}$.
Complete joint slices at $\lambda\in\{0.4,1.2\}$ ($30/30$ cells, $5\eta{\times}3$ seeds) show comparable linear $L_{20}(\eta)$ slopes ($\approx 1.69$ and $\approx 2.00$); we do not fit a continuous $(\lambda,\eta)$ surface.
We do not report DreamerV3 or TD-MPC2 returns.
\end{abstract}

\noindent\textbf{Keywords:}
world models,
latent consistency,
Lipschitz expansion proxy,
stochastic forcing,
Moving-MNIST,
model-based RL diagnostics

\section{Introduction}
\label{sec:intro}

World models and joint-embedding predictors often roll out farther than their training horizon~\citep{ha2018world,hafner2019learning,hafner2023mastering,lecun2022path}.
When they fail, errors compound.
A practical question sits underneath those systems: if you turn up multi-step consistency, what actually changes in the learned transition?

This paper does not propose a new architecture or a leaderboard entry.
It studies one training term,
\[
\mathcal{L}_{\mathrm{cons}}=\sum_{k\in\mathcal{K}}\bigl\|\hat{z}_{t+k}-z^{\mathrm{tf}}_{t+k}\bigr\|_2^2,
\quad \mathcal{K}=\{1,3,5,10,15,20\},
\]
weighted by $\lambda$ against reconstruction and one-step losses, and asks whether $\lambda$ moves a finite-sample expansion proxy $L_{20,\mathrm{q95}}$ across the reference line $L{=}1$.

Practitioners already turn this knob.
What is missing is a domain-aware account of what it does to transition geometry, when population contraction appears, and how that relates to action-conditioned settings where planning actually happens.
We treat the study as a diagnostic: report where the proxy crosses $L{<}1$, where it does not, and whether a single forcing parameter can place those regimes on one curve.

The empirical picture is mixed in a useful way.
Passive Moving-MNIST shows a sharp, seed-replicated drop in both $L_{20}$ and $E_{20}$ as $\lambda$ rises, with several seeds crossing $L{<}1$ near $\lambda{=}0.8$.
The same schedule on Pendulum, CartPole, and KTH tightens spectra or error without population $L{<}1$.
That split is the main result, and it motivates the stochastic-forcing experiment: inject controllable noise $\eta$ into passive transitions and ask whether control domains sit on the same $L_{20}(\eta)$ curve.
They do, approximately linearly, which reframes the passive/active anecdotes as placements on one law rather than isolated failures.
Joint slices at two other $\lambda$ values track the same qualitative rise in $\eta$, so the forcing story is not an artifact of fixing $\lambda{=}0.8$.

\paragraph{Contributions.}
\begin{enumerate}
\item A quantified MMNIST threshold: at $\lambda{=}0.8$, mean $L_{20}$ sits near $1$ ($4/6$ seeds below $1$), with significant paired drops in both $L_{20}$ and $E_{20}$ ($n{=}6$).
\item A negative transfer result for action-conditioned Gymnasium domains and for KTH: $L{<}1$ does not appear at the population level.
\item Associational mediation ($\hat{r}{=}0.94$) with an explicit non-randomization caveat, plus run-level sensitivity ($\hat{r}{\approx}0.80$).
\item A stochastic-forcing law at fixed $\lambda{=}0.8$: $\hat L_{20}\approx a+b\eta$ with bootstrap CIs on $b$, placing Pendulum/CartPole via calibrated $\eta_{\mathrm{eff}}$.
\item A joint $(\lambda,\eta)$ boundary at $\lambda\in\{0.4,1.2\}$ ($30/30$ cells): both slices rise approximately linearly in $\eta$ with slopes near the primary law (descriptive; no formal $\lambda{\times}\eta$ interaction test).
\item Defensive experiments with locked multi-seed numbers: WorldTest ($n{=}5$), MPC ($n{=}3$, real $\lambda{=}0$), and scaling $d{\in}\{16,32,64,128\}$.
\end{enumerate}

\section{Related work}
\label{sec:related}

We organize prior work by theme and state, for each, how the present diagnostic study relates.
The goal is not a leaderboard comparison but a map of \emph{where} a training-time consistency knob touches geometry, planning, and reliability.

\subsection{RSSM, Dreamer, and latent model-based RL}
Recurrent state-space models (RSSMs) learn stochastic latents with reconstruction and rollout losses, then feed imagined trajectories to actors and critics~\citep{hafner2019learning,hafner2021mastering,hafner2023mastering,ha2018world}.
Dreamer-style agents treat the world model as a simulator for policy optimization; TD-MPC2 later couples latent dynamics to gradient-based MPC at scale~\citep{hansen2024tdmpc2}.
Contrastive and DeepMDP objectives also learn latent transitions for control~\citep{kipf2020contrastive,gelada2019deepmdp}.
These systems \emph{implicitly} care about long-horizon behavior (policies are trained on multi-step rollouts) but they rarely report an explicit expansion statistic such as $L_{20,\mathrm{q95}}$.
Our contribution is orthogonal: we hold architecture fixed (GRU residual) and ask what a \emph{standalone} multi-horizon consistency weight $\lambda$ does to a measurable local expansion proxy, without claiming a new agent or benchmark score.

\subsection{JEPA and representation-space prediction}
Joint-embedding predictive architectures predict in representation space rather than pixel space~\citep{lecun2022path,assran2023self,bardes2024revisiting}.
V-JEPA~2 extends the family to action-conditioned video~\citep{assran2025vjepa2}.
JEPAs stabilize training with architectural and objective choices (masking, EMA targets, multi-crop views) that differ from our explicit $\mathcal{L}_{\mathrm{cons}}$ term.
\citet{pendharkar2026exogenous} argue that pure predictive objectives can discard control-relevant stochastic structure, a caution relevant when reading MMNIST contraction as universally desirable.
We include a partial exogenous-digit stress test (reporting $L_{20}$/$E_{20}$ only) to probe whether consistency collapses under injected stochasticity; it does not replicate their retention probes.
Our passive-video results sit in the JEPA-adjacent regime (representation prediction without action labels) but use a simpler GRU transition and a geometric diagnostic rather than a self-supervised leaderboard.

\subsection{Lipschitz bounds, contraction, and stability}
Hard stability constraints (spectral normalization, antisymmetric transitions, stable RNN analyses) bound gain directly~\citep{miyato2018spectral,chang2019antisymmetric,miller2018stable,newhouse2026lipschitz}.
\citet{asadi2018lipschitz} study model-based RL through Wasserstein/Lipschitz model classes: they prove multi-step and value-error bounds that scale with the transition Lipschitz constant and show empirically that an \emph{intermediate} hard clip $k$ on CartPole/Pendulum models maximizes returns when policies are co-trained inside the learned model (see \S\ref{sec:discussion-asadi}).
Our work uses a \emph{soft} multi-horizon agreement loss rather than a per-layer norm cap, and we measure a finite-sample expansion proxy rather than certifying $\|f\|_{\mathrm{Lip}}$.
The textbook horizon-error recurrence (Theorem~\ref{thm:recurrence}) motivates monitoring whether an \emph{estimated} proxy crosses $L{=}1$; Proposition~\ref{prop:proxy} states when $L_{20,\mathrm{q95}}$ is a valid finite-sample statistic for local $k$-step gain (Appendix~\ref{app:proxy}).
Concurrent theory lines pursue contractivity by other means: multi-token prediction gradients~\citep{liu2026lsemtp}, contractive global memory in video stacks~\citep{kairos2026}, Lieb--Robinson locality bounds on pixel dynamics~\citep{diamondlol2026}, and variable-length latent supervision for planning~\citep{vlwm2026}.
Those papers derive or enforce contraction structurally; we empirically \emph{measure} whether a common training knob moves a proxy across a reference line, and document where it fails (action-conditioned control, KTH).

\subsection{Post-hoc and test-time diagnostics (2026)}
A recent wave treats world-model quality as a \emph{diagnostic} rather than only a training loss.
ATM constructs an action-consistency transfer matrix on frozen latent transitions to test whether action semantics survive rollouts~\citep{chen2026atm}.
\citet{ruan2026futurecompatible} score action--state compatibility in World Action Models on robotics simulators and use it for test-time model selection.
AutumnBench WorldTest scores structural understanding via masked-frame prediction and change detection~\citep{warrier2025autumnbench}; we reuse that scoring recipe on our checkpoints but do not claim the full 43-environment suite.
These tools answer ``is this \emph{trained} model reliable for planning or selection?'' at evaluation time.
We instead vary $\lambda$ \emph{during} training and track $L_{20,\mathrm{q95}}$ and $E_{20}$, producing a \emph{training-time} geometry curve.
The axes are complementary: ATM/Ruan-style checks can screen a checkpoint; a $\lambda$ sweep explains how the checkpoint got there.
Neither substitutes for the other.

\subsection{Positioning}
This paper is a mechanistic diagnostic study, not a new world model.
It contributes (i)~a quantified passive-video threshold in $L_{20,\mathrm{q95}}$, (ii)~negative transfer on action-conditioned and natural-video domains, (iii)~associational mediation with explicit non-randomization caveats, (iv)~a stochastic-forcing law linking passive and active regimes at fixed $\lambda$, (v)~complete joint $(\lambda,\eta)$ slices at $\lambda\in\{0.4,1.2\}$, and (vi)~defensive checks (architectural baselines, exogenous stress, WorldTest, MPC, scaling).
It does not compete with Dreamer/TD-MPC2 returns, certify global Lipschitz constants, or replace post-hoc diagnostics.
The open question, why passive video crosses $L{<}1$ while active control does not, is developed in \S\ref{sec:discussion-passive} and tested via $\eta$ in \S\ref{sec:eta-law}.

\paragraph{What this paper is not.}
It is not a Dreamer or JEPA replacement; the architecture is fixed on purpose.
It is not a certified contraction theorem for video world models.
It does not claim that $L{<}1$ improves planning returns---on Pendulum, our MPC result argues the opposite.
It does not identify a causal $\lambda\to L\to E$ effect (mediation is associational).
If you want a new SOTA agent or a universal Lipschitz certificate, this is the wrong paper.
If you want to audit the geometry of a trained latent transition under a consistency weight, that is the target reader.

\section{Preliminaries}
\label{sec:prelim}

\paragraph{Latent transition.}
Let $z_t\in\mathcal{Z}\subseteq\mathbb{R}^d$ denote an encoded latent at time $t$.
A one-step transition map $f:\mathcal{Z}\to\mathcal{Z}$ (optionally conditioned on action $a_t$) induces the $k$-step map $f^{(k)}$.
Teacher-forced latents $z^{\mathrm{tf}}_{t+k}$ come from encoding observed frames or states; free-run predictions $\hat{z}_{t+k}$ roll out $f$ from $z_t$.

\paragraph{Regimes.}
\emph{Passive video} trains $f$ without actions (Moving-MNIST, KTH).
\emph{Action-conditioned control} trains $f(z_t,a_t)$ on offline Gymnasium trajectories (Pendulum-v1, CartPole-v1).
The same consistency weight $\lambda$ is applied in both regimes; the estimand below is regime-agnostic, but the induced geometry need not be.

\paragraph{Estimand.}
For pairs $(z_i,z_j)$ drawn from a validation latent pool,
\[
R^{(k)}_{ij}=\frac{\|f^{(k)}(z_i)-f^{(k)}(z_j)\|}{\|z_i-z_j\|},
\qquad
L_{k,\mathrm{q95}}=\widehat{Q}_{0.95}\bigl(\{R^{(k)}_{ij}\}\bigr).
\]
We report $L_{20,\mathrm{q95}}$ (written $L_{20}$ in tables) and horizon-$20$ prediction MSE $E_{20}$.
$L_{20,\mathrm{q95}}{<}1$ means that on $95\%$ of sampled pairs the $20$-step map is finite-difference non-expansive; it is not a certificate that $\sup\|J_f\|{<}1$ (Appendix~\ref{app:proxy}).

\paragraph{Stochastic forcing.}
To interpolate passive and active geometry we inject
$z\leftarrow f(z)+\eta\,\sigma^\star\varepsilon/\sqrt{d}$, $\varepsilon\sim\mathcal{N}(0,I)$, during training at fixed $\lambda$, and place control domains by calibrated $\eta_{\mathrm{eff}}$ (Section~\ref{sec:eta-law}).

\section{Method}
\label{sec:method}

\subsection{Models}
\textbf{Passive video.} Convolutional encoder, GRU residual transition $\hat{z}_{t+1}=z_t+\alpha\tanh(\Delta z_t)$ with $\alpha{=}0.5$, $d{=}64$, hidden width $160$~\citep{srivastava2015unsupervised}.

\textbf{Action-conditioned control.} MLP encoder of $(s_t,a_t)$; transition consumes $(z_t,a_t)$. Offline random (and, for CartPole, heuristic) Gymnasium trajectories~\citep{tassa2018deepmindcontrol}.

\subsection{Loss and metrics}
Total loss $\mathcal{L}=\mathcal{L}_{\mathrm{recon}}+\mathcal{L}_{\mathrm{1step}}+\lambda\,\mathcal{L}_{\mathrm{cons}}$.
Sweeps use $\lambda\in\{0,0.2,\ldots,1.2\}$ unless noted.

For held-out latent pairs the model rolls out $K{=}20$ steps and forms ratios
$r_{ij}^{(k)}=\|f^{(k)}(z_i)-f^{(k)}(z_j)\|/\|z_i-z_j\|$.
$L_{20,\mathrm{q95}}$ is the $95$th percentile of those ratios.
This is a finite-sample local expansion proxy, not a certified global Lipschitz constant~\citep{miller2018stable,newhouse2026lipschitz}.
$E_{20}$ is MSE on stress-split frames at horizon $20$.

\subsection{Training}
Default: AdamW, lr $2{\times}10^{-3}$, $20$ MMNIST epochs / $25$ Pendulum epochs, seeds $\{42,43,44\}$, with two extra MMNIST seeds at $\lambda\in\{0,0.8\}$ for paired tests ($n{=}6$).
Hyperparameters are listed in Appendix~\ref{app:hyper}.

\subsection{Evaluation protocol}
Unless noted otherwise, each $(\lambda,\mathrm{seed})$ cell is an independent training run from the same data split.
$L_{20,\mathrm{q95}}$ uses $n_{\mathrm{pairs}}{=}240$ random chords from a held-out stress or validation pool with a small number of loader batches (default $4$--$25$ depending on stage; see Appendix~\ref{app:hyper}).
Action-conditioned $L_{20}$ uses matched random action sequences for both members of a pair unless an ablation specifies zero or matched logged actions.
We report mean$\pm$std across seeds and, for the MMNIST critical pair, paired $t$ and Wilcoxon tests.
Bootstrap confidence intervals for mediation and for the $\eta$-law slope use $B{=}2000$ resamples (run-level or seed-block, respectively).
No hyperparameter search over $\lambda$ is claimed beyond the reported grid; $\lambda{=}0.8$ is selected as the MMNIST operating point where mean $L_{20}$ first sits near $1$ with multiple seeds below $1$.

\subsection{Experimental design summary}
The study is organized as a sequence of locked stages rather than an open-ended leaderboard chase:
\begin{enumerate}
\item \emph{Primary geometry curve} on Moving-MNIST ($\lambda$ grid; critical pair $n{=}6$).
\item \emph{Transfer negatives} on Pendulum, CartPole, and KTH under the same $\lambda$ grid.
\item \emph{Mechanism} via stochastic forcing $\eta$ at fixed $\lambda{=}0.8$ (primary $n{=}50$).
\item \emph{Defenses}: architectural baselines, exogenous digit, WorldTest-style scoring, latent MPC, latent-dimension scaling.
\item \emph{Joint} $(\lambda,\eta)$ boundary at $\lambda\in\{0.4,1.2\}$ ($30/30$ cells; Appendix~\ref{app:eta-joint}).
\end{enumerate}
Every claim in the abstract is backed by a locked CSV under \texttt{results/master/}
and checked by the package validator (Appendix~\ref{app:repro}).

\begin{algorithm}[H]
\caption{Train and evaluate a consistency-weighted latent world model}
\label{alg:pipeline}
\begin{algorithmic}[1]
\Require Dataset $\mathcal{D}$, weight $\lambda$, horizons $\mathcal{K}$, noise level $\eta$ (default $0$)
\State Initialize encoder, transition $f$, decoder
\For{epoch $=1,\ldots,E$}
  \For{minibatch $B\subset\mathcal{D}$}
    \State Encode teacher-forced latents $\{z_t\}$; optionally add noise of scale $\eta\sigma^\star$
    \State Compute $\mathcal{L}_{\mathrm{recon}}+\mathcal{L}_{\mathrm{1step}}+\lambda\sum_{k\in\mathcal{K}}\|\hat{z}_{t+k}-z^{\mathrm{tf}}_{t+k}\|_2^2$
    \State AdamW step with gradient clipping
  \EndFor
\EndFor
\State Encode validation pool; sample $n_{\mathrm{pairs}}$ chords; compute $L_{20,\mathrm{q95}}$ and $E_{20}$
\State \Return checkpoint and metrics
\end{algorithmic}
\end{algorithm}

\FloatBarrier

\section{Theoretical background}
\label{sec:theory}

We use two theoretical pieces: a horizon-error recurrence under Lipschitz dynamics (standard, but proved in full in Appendix~\ref{app:recurrence}), and a validity argument for the finite-sample statistic $L_{20,\mathrm{q95}}$ (Appendix~\ref{app:proxy}).

\subsection{Recurrence bound}
Let $f:\mathcal{Z}\to\mathcal{Z}$ be a latent transition map and $\hat{z}_{t+1}=f(\hat{z}_t)+\xi_t$ a learned rollout with per-step disturbance $\|\xi_t\|\le\epsilon$.
Write $e_t=\|\hat{z}_t-z_t\|$ for rollout error against a reference trajectory.
\begin{theorem}[Horizon error recurrence]
\label{thm:recurrence}
If $f$ is $L$-Lipschitz on the rollout manifold and $e_0=0$, then for horizon $h\ge 1$,
\begin{equation}
e_h \le L^{h} e_0 + \epsilon\,\frac{1-L^{h}}{1-L}
\qquad (L\neq 1),
\qquad
e_h \le h\epsilon \qquad (L=1).
\end{equation}
In particular, if $L<1$ the disturbance term saturates as $h\to\infty$; if $L>1$, error can grow exponentially.
\end{theorem}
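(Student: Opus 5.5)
The plan is to derive a one-step inequality for $e_t$ and unroll it by induction, which is the standard discrete Grönwall argument. The reference trajectory is taken to be an exact orbit of $f$, so $z_{t+1}=f(z_t)$, while the rollout obeys $\hat{z}_{t+1}=f(\hat{z}_t)+\xi_t$. Subtracting the two updates gives $\hat{z}_{t+1}-z_{t+1}=f(\hat{z}_t)-f(z_t)+\xi_t$. The triangle inequality, together with the $L$-Lipschitz hypothesis, then yields
\[
e_{t+1}\le L\,e_t+\epsilon .
\]
This step needs both $\hat{z}_t$ and $z_t$ to lie in the region where the Lipschitz constant applies. That is exactly the role of the phrase ``on the rollout manifold'' in the statement, and I would make it explicit as a standing assumption.

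Next I would prove by induction on $h$ that
\[
e_h\le L^h e_0+\epsilon\sum_{j=0}^{h-1}L^j .
\]
The base case $h=0$ is trivial. For the inductive step, substitute the hypothesis into the one-step bound:
\[
e_{h+1}\le L\Bigl(L^h e_0+\epsilon\sum_{j=0}^{h-1}L^j\Bigr)+\epsilon=L^{h+1}e_0+\epsilon\sum_{j=0}^{h}L^j .
\]
Multiplying by $L$ preserves the inequality because $L\ge 0$. The finite geometric sum equals $(1-L^h)/(1-L)$ when $L\neq 1$ and equals $h$ when $L=1$. With $e_0=0$ the $L=1$ case is $e_h\le h\epsilon$, and for $L\neq1$ the $L^he_0$ term vanishes, though I would keep it to match the displayed formula.

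For the qualitative claims, consider first $0\le L<1$. Then $(1-L^h)/(1-L)$ increases monotonically to $1/(1-L)$, so $e_h\le \epsilon/(1-L)$ uniformly in $h$; this is the saturation claim. For $L>1$, the bound $\epsilon(L^h-1)/(L-1)$ grows like $L^h$. The phrase ``can grow'' requires only an example showing the bound is attained. I would take the scalar system $f(z)=Lz$ with constant disturbance $\xi_t=\epsilon$ and $\hat{z}_0=z_0$. Then $e_h$ equals $\epsilon(L^h-1)/(L-1)$ exactly, so the growth is genuinely exponential.

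None of these steps is hard. The main point of care is the first one, where the Lipschitz estimate must be applicable along both trajectories. I would state it either as a global Lipschitz assumption on $\mathcal{Z}$, or as a requirement that $\mathcal{Z}$ be forward-invariant for both the reference and perturbed rollouts, so that each iterate stays in the domain of the Lipschitz bound.
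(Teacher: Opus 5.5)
Your proposal is correct and follows the paper's proof in Appendix~\ref{app:recurrence}: you derive $e_{t+1}\le \epsilon + L e_t$ from the triangle inequality and the Lipschitz bound, unroll it by induction to $L^h e_0+\epsilon\sum_{j=0}^{h-1}L^j$, and close the geometric sum. Your explicit assumption $z_{t+1}=f(z_t)$ is exactly what the paper's base case uses implicitly when it silently drops the $f(z_0)-z_1$ term, and your scalar example $f(z)=Lz$ with $\xi_t=\epsilon$ shows the bound is attained, which makes the ``can grow exponentially'' clause concrete rather than just suggested.
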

The bound is classical in spirit~\citep{asadi2018lipschitz,bacciotti2006lyapunov,talvitie2014}; Appendix~\ref{app:recurrence} gives a step-by-step proof by induction.
We use it only to motivate monitoring an \emph{estimated} expansion statistic: crossing a proxy below $1$ is suggestive of non-explosive $k$-step gain on the measured set, not a certificate that $\sup\|J_f\|<1$.

\subsection{What \texorpdfstring{$L_{20,\mathrm{q95}}$}{L20,q95} estimates}
For held-out latents $z_i$ and $k$-step map $f^{(k)}$, define pair ratios
$r_{ij}^{(k)}=\|f^{(k)}(z_i)-f^{(k)}(z_j)\|/\|z_i-z_j\|$.
Our metric $L_{20,\mathrm{q95}}$ is the $95$th percentile of $\{r_{ij}^{(20)}\}$ over $n_{\mathrm{pairs}}$ random pairs from validation latents (implementation: \texttt{src/metrics.py}).
\begin{proposition}[Proxy validity (informal)]
\label{prop:proxy}
Suppose $f$ is $L$-Lipschitz on the support of the validation latent distribution.
Then every sampled ratio satisfies $r_{ij}^{(k)}\le L^{k}$.
If additionally $f$ is $C^1$, pairs are drawn with $\|z_i-z_j\|\le\delta$ for small $\delta$, and rollout actions are fixed across the pair (passive video or matched-action protocol), then
$r_{ij}^{(k)}$ concentrates around the directional gain $\|(J_f(z_i)^{k}(z_j-z_i))\|/\|z_j-z_i\|$ as $\delta\to 0$.
The empirical $95$th percentile is a consistent estimator of the $95$th quantile of $r_{ij}^{(k)}$ under i.i.d.\ pair resampling as $n_{\mathrm{pairs}}\to\infty$.
\end{proposition}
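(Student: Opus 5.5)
The proposition has three independent parts, and I would prove them in order, each with its own standard tool.

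\emph{Part 1 (uniform bound).} The key step is that the whole rollout stays on the set where the Lipschitz hypothesis applies. I would state this explicitly as forward invariance: $f$ maps the relevant support (or the rollout manifold of Theorem~\ref{thm:recurrence}) into itself. With that in hand, induct on $k$:
\[
\|f^{(k)}(z_i)-f^{(k)}(z_j)\| = \|f(f^{(k-1)}(z_i))-f(f^{(k-1)}(z_j))\| \le L\,\|f^{(k-1)}(z_i)-f^{(k-1)}(z_j)\|,
\]
so $\|f^{(k)}(z_i)-f^{(k)}(z_j)\|\le L^k\|z_i-z_j\|$. Dividing by $\|z_i-z_j\|>0$ gives $r^{(k)}_{ij}\le L^k$. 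Under the matched-action protocol, the same argument applies to $f(\cdot,a_t)$ with the shared action sequence, provided $L$ is a uniform Lipschitz constant in $z$ across actions.

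\emph{Part 2 (small-chord limit).} This uses first-order Taylor expansion of the composed map.
\begin{itemize}
\item For $C^1$ $f$, the chain rule gives $J_{f^{(k)}}(z_i)=\prod_{m=k-1}^{0}J_f(f^{(m)}(z_i))$.
\item Write $z_j=z_i+\delta u$ with $\|u\|=1$. Then $f^{(k)}(z_j)-f^{(k)}(z_i)=\delta\,J_{f^{(k)}}(z_i)u+o(\delta)$, with the remainder uniform on compact sets by uniform continuity of the Jacobian.
\item Dividing by $\delta$ gives $r^{(k)}_{ij}\to\|J_{f^{(k)}}(z_i)u\|$.
\end{itemize}
The main obstacle is that the statement writes $J_f(z_i)^k$. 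That power equals the true product only when $z_i$ is a fixed point, or when the Jacobian is constant along the orbit (locally linear $f$). I would prove the limit with the orbit product $\prod_m J_f(f^{(m)}(z_i))$ and present $J_f(z_i)^k$ as this informal proposition's shorthand, valid at equilibria or for linearized dynamics. I would also interpret ``concentrates'' as a deterministic limit conditional on the direction $u$, since there is no randomness in the ratio once the pair is fixed.

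\emph{Part 3 (quantile consistency).} If the pairs are i.i.d., the ratios $r^{(k)}_{ij}$ are i.i.d. with some CDF $F$. The empirical quantile $\widehat{Q}_{0.95}$ converges almost surely to $F^{-1}(0.95)$ whenever $F$ is strictly increasing at that point. This follows from Glivenko--Cantelli, or directly from the strong law applied to the indicators $\mathbf{1}\{r\le q\pm\varepsilon\}$. I would state the strict-increase (uniqueness) condition as an additional hypothesis, since without it the quantile is not identified. Part~1 gives boundedness of the ratios but is not needed for this step.
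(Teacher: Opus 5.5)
Your three-part decomposition matches the paper's argument in Appendix~\ref{app:proxy}: induction on $k$ for the composition bound (Lemma~\ref{lem:composition}), a first-order Taylor expansion of $f^{(k)}$ for the small-chord limit, and order-statistics theory for the quantile. Where you depart from it, you are more careful, and each change is warranted.

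\textbf{Part~1.} The paper's lemma takes $f$ to be $L$-Lipschitz everywhere, which hides the issue you flag. Your forward-invariance hypothesis is the other way to make ``Lipschitz on the support'' enough. Without one of the two, the bound can fail for $k\ge 2$, because the iterates $f^{(m)}(z_i)$ may leave the set on which the constant $L$ holds.

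\textbf{Part~2.} The paper's Taylor step asserts $\|f^{(k)}(z+\delta v)-f^{(k)}(z)\|/\delta\to\|J_f(z)^{k}v\|$ for a fixed direction $v$. As you observe, the chain rule actually gives the orbit product $J_f(f^{(k-1)}(z))\cdots J_f(z)$. So the statement with $J_f(z_i)^k$ is generally right only at fixed points or for linear dynamics, not along a general orbit. Your corrected limit is the version that holds, and so is your remainder uniform over directions on compact sets, which is needed because the chord direction varies with the pair. You are proving a corrected form of the proposition rather than its literal wording, and that is the right choice.

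\textbf{Part~3.} The paper appeals to asymptotic normality, $\sqrt{n}(\hat{Q}_{0.95}-Q_{0.95})\to\mathcal{N}(0,\sigma^2)$ under smoothness. That yields consistency plus a $\sqrt{n}$ rate, but it needs a positive density at the quantile. Your Glivenko--Cantelli/strong-law argument gives almost-sure consistency under only the uniqueness condition on $F$. The price is that it supplies no rate and no confidence interval.
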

Appendix~\ref{app:proxy} states coverage conditions, relates $k$-step ratios to composed Jacobians, and connects the statistic to a one-step spectral-radius probe.
A short Jacobian power-iteration check on MMNIST (Appendix~\ref{app:jacobian}) shows both $L_{20,\mathrm{q95}}$ and mean one-step spectral radius drop when $\lambda$ rises ($1.29\to1.07$ and $1.29\to1.02$ at six epochs), consistent with Proposition~\ref{prop:proxy} but not certifying a global Lipschitz constant.

\section{Results}
\label{sec:results}

\subsection{Moving-MNIST: a rapid \texorpdfstring{$\lambda$}{lambda} threshold}
Table~\ref{tab:mmnist} and Figure~\ref{fig:phase_mmnist} summarize $27$ MMNIST runs, with $n{=}6$ at $\lambda\in\{0,0.8\}$.
$L_{20}$ falls from $4.96{\pm}2.01$ to $1.01{\pm}0.06$ (paired $t$ $p{=}0.005$; Wilcoxon $p{=}0.031$).
$E_{20}$ falls from $0.365{\pm}0.0007$ to $0.177{\pm}0.0003$ (paired $t$ $p{=}1.1{\times}10^{-13}$; Wilcoxon $p{=}0.031$).
Seeds $43,45,46,47$ sit below $L{=}1$ at $\lambda{=}0.8$ ($4/6$).
Mean $L_{20}$ at that setting is on the boundary rather than deep in a contractive regime.

Call this an \emph{empirical regime change}: a rapid threshold in measured $L_{20}$ and $E_{20}$ as $\lambda$ increases.
No non-analytic statistical-mechanics phase transition is claimed.

\begin{table}[ht]
\centering
\caption{Moving-MNIST $\lambda$-sweep (mean $\pm$ std). $n{=}6$ at $\lambda\in\{0,0.8\}$; otherwise $n{=}3$.}
\label{tab:mmnist}
\small
\begin{tabular}{lccc}
\toprule
$\lambda$ & $L_{20,\mathrm{q95}}$ & $E_{20}$ & frac.\ $L{<}1$ \\
\midrule
0.0 & $4.96 \pm 2.01$ & $0.365 \pm 0.0007$ & 0/6 \\
0.2 & $1.04 \pm 0.03$ & $0.177 \pm 0.0003$ & 0/3 \\
0.4 & $1.06 \pm 0.03$ & $0.177 \pm 0.0001$ & 0/3 \\
0.6 & $1.04 \pm 0.04$ & $0.177 \pm 0.0002$ & 0/3 \\
0.8 & $1.01 \pm 0.06$ & $0.177 \pm 0.0003$ & 4/6 \\
1.0 & $1.06 \pm 0.06$ & $0.177 \pm 0.0003$ & 0/3 \\
1.2 & $1.12 \pm 0.04$ & $0.177 \pm 0.0004$ & 0/3 \\
\bottomrule
\end{tabular}
\end{table}

\begin{figure}[ht]
\centering
\includegraphics[width=0.48\linewidth]{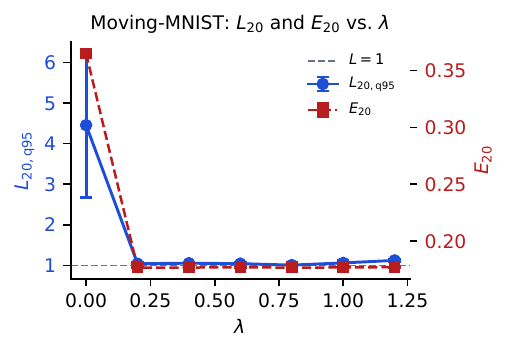}
\includegraphics[width=0.48\linewidth]{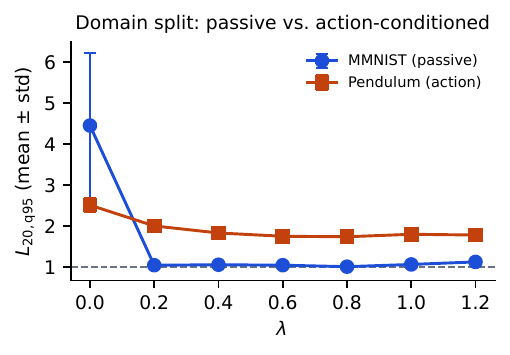}
\caption{Left: MMNIST $L_{20}$ and $E_{20}$ vs.\ $\lambda$. Right: passive MMNIST vs.\ action-conditioned Pendulum (mean $\pm$ std; dashed line $L{=}1$).}
\label{fig:phase_mmnist}
\end{figure}
\FloatBarrier

\subsection{Pendulum and CartPole: tightening without \texorpdfstring{$L{<}1$}{L<1}}
On Pendulum-v1, $L_{20}$ decreases from $2.51{\pm}0.17$ at $\lambda{=}0$ ($n{=}5$) to $1.74{\pm}0.13$ at $\lambda{=}0.8$ ($n{=}3$), then stays near $1.78$ at $\lambda{=}1.2$.
No seed crosses $L{<}1$.
$E_{20}$ is already near $10^{-4}$, so error is a weak phase detector.
Action-protocol ablations at $\lambda{=}0.8$ (random / zero / matched actions) leave $L_{20}$ in $\{1.73,1.81,1.85\}$ (Appendix~\ref{app:pendulum_mech}).

CartPole-v1 shows the same qualitative split:
$L_{20}$ stays expansive at every $\lambda$ with $n{=}5$ ($2.96{\pm}0.26$, $3.18{\pm}0.24$, $3.17{\pm}0.42$, $2.99{\pm}0.20$ at $\lambda\in\{0,0.4,0.8,1.2\}$) while $E_{20}$ drops once $\lambda{\ge}0.4$ (Appendix~\ref{app:cartpole}).
On dm\_control cartpole-swingup (offline random trajectories, same GRU residual backbone), a CPU sweep yields $\lambda{=}0$: $L_{20}{=}1.41{\pm}0.01$ ($n{=}3$); $\lambda{=}0.4$: $1.50{\pm}0.01$ ($n{=}3$); $\lambda{=}0.8$: $1.55{\pm}0.13$ ($n{=}3$); $\lambda{=}1.2$: $1.55{\pm}0.06$ ($n{=}2$; seed~$44$ not completed).
No population $L{<}1$ appears on this grid, matching Gymnasium CartPole-v1 qualitatively (tighter absolute $L_{20}$, still expansive).
We treat this as a recognized-control sanity check rather than a Dreamer-scale benchmark.

\subsection{KTH Actions}
On real KTH clips ($n{=}3$, seq\_len $16$), $L_{20}$ tightens from $3.32{\pm}2.57$ at $\lambda{=}0$ to $1.54{\pm}0.66$ at $\lambda{=}1.2$.
Every seed decreases, seed $44$ remains an expansive outlier, and no seed crosses $L{<}1$.
$E_{20}$ stays near $0.05$.
A noisy three-digit MNIST proxy (Appendix~\ref{app:s3}) is retained only as a stress test; KTH is the natural-video result.

\subsection{Latent dimension}
For $d\in\{16,32,128\}$ we sweep $\lambda\in\{0,0.6,0.8\}$ with $n{=}3$; $d{=}64$ is the main MMNIST table.
At $d{=}16$, mean $L_{20}{=}0.91$ already at $\lambda{=}0$ ($2/3$ seeds $L{<}1$).
At $d{=}32$, $L_{20}$ falls $3.47{\rightarrow}0.89$ by $\lambda{=}0.6$ ($1.01$ at $\lambda{=}0.8$).
At $d{=}128$, $L_{20}$ falls $4.96{\rightarrow}1.14$ and never crosses $L{<}1$.
Larger latents need stronger $\lambda$ to approach the contractive band.
This finite-size pattern is consistent with an under-constrained residual map in high $d$: soft multi-horizon agreement must fight more directions before the $95$th percentile of chord ratios drops below $1$.

\subsection{Stochastic forcing: a linear expansion law}
\label{sec:eta-law}

To unify the passive/active split, we inject action-like noise into passive MMNIST transitions during training:
$z\leftarrow f(z)+\eta\,\sigma^\star\varepsilon/\sqrt{d}$ with $\varepsilon\sim\mathcal{N}(0,I)$, fixed $\lambda{=}0.8$, and $\sigma^\star{=}0.5$.
Primary GPU sweep: $\eta\in\{0,0.1,\ldots,1.2\}$ (10 points) $\times$ 5 seeds ($n{=}50$ runs, 16 epochs).
Mean $L_{20,\mathrm{q95}}$ increases approximately linearly,
$\hat L_{20}\approx 1.23 + 1.82\eta$
(block-bootstrap 95\% CI on slope $[1.73,1.92]$, $R^2{=}0.96$; Figure~\ref{fig:eta}).
\begin{figure}[H]
\centering
\includegraphics[width=0.78\linewidth]{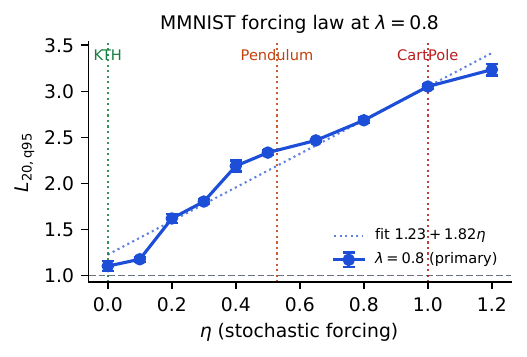}
\caption{Stochastic forcing at $\lambda{=}0.8$: mean $L_{20,\mathrm{q95}}$ vs.\ $\eta$ on MMNIST ($n{=}5$ seeds per point; SEM bars). Dotted line: least-squares fit. Vertical dotted lines: calibrated $\eta_{\mathrm{eff}}$ for Pendulum, CartPole, and KTH.}
\label{fig:eta}
\end{figure}
We do \emph{not} observe a mean-level $L{=}1$ upcrossing on this grid ($\hat L_{20}(\eta{=}0){=}1.10$, per-$\eta$ 95\% CI $[0.95,1.26]$), but calibrated effective noise levels place Pendulum ($\eta_{\mathrm{eff}}{\approx}0.53$) and CartPole ($\eta_{\mathrm{eff}}{\approx}1.0$) at predicted $\hat L_{20}\approx 2.2$ and $3.1$, consistent with population $L{>}1$ in those domains.
KTH is the exception under the same calibration: $\eta_{\mathrm{eff}}{\approx}0$ by construction (no actions), yet $L_{20}$ stays expansive; we treat that as a natural-video residual outside the forcing law rather than a contradiction of the Pendulum/CartPole placements.
Primary bootstrap CIs ($n{=}5$ seeds per $\eta$) are reported in \texttt{eta\_law\_bootstrap.json}.
Joint boundary slices at $\lambda\in\{0.4,1.2\}$ are complete ($30/30$ cells; Appendix~\ref{app:eta-joint}) and show comparable linear $L_{20}(\eta)$ slopes; we do not claim a denser continuous $(\lambda,\eta)$ surface or a finer $\eta$ grid on the primary curve.

\section{Associational mediation}
\label{sec:mediation}

We fit Baron--Kenny paths on z-scored $\lambda$, $\log L_{20}$, and $\log E_{20}$ with DAG $\lambda\to L\to E$ and a direct $\lambda\to E$ edge~\citep{baron1986moderator,pearl2009causal}.
\begin{figure}[H]
\centering
\includegraphics[width=0.55\linewidth]{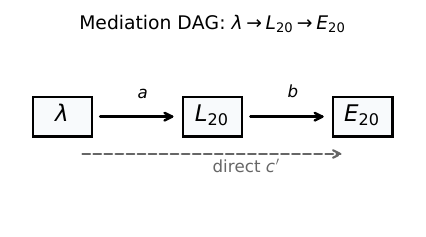}
\caption{Associational mediation DAG used for the path-ratio analysis. Solid arrows: $\lambda\to L_{20}\to E_{20}$; dashed: direct $\lambda\to E_{20}$.}
\label{fig:dag}
\end{figure}
Point estimates use seven $\lambda$-averaged means.
Confidence intervals use run-resampled bootstrap ($B{=}2000$): resample $27$ runs, re-aggregate, recompute paths.

Primary result: $\hat{r}{=}0.94$, 95\% CI $[0.88,1.00]$.
Run-level sensitivity on all $27$ tuples: $\hat{r}{\approx}0.80$, CI $[0.65,0.95]$.
Because $\lambda$ is a training hyperparameter rather than a randomized intervention, these numbers are associational.
They do not identify a causal effect in the Pearl sense.

Interpretation is deliberately cautious.
A large path ratio is consistent with the story that $\lambda$ associates with geometry and that geometry associates with long-horizon error on MMNIST.
It is not a license to say ``raise $\lambda$ to contract $L$, then $E$ falls.''
Randomizing $\lambda$ (or $\eta$) inside a fixed architecture would be the next design step for causal language (Section~\ref{sec:open}).

\section{Defensive experiments}
\label{sec:defensive}

\subsection{What we do not compare}
DreamerV3 DMC returns, TD-MPC2 multi-task returns, RSSM stochastic-latent variants, and a full AutumnBench leaderboard are out of scope~\citep{hafner2023mastering,hansen2024tdmpc2,warrier2025autumnbench}.
Those omissions are intentional. The paper is a diagnostic study of $\lambda$.
Comparing against agent returns would change the estimand from geometry to control performance and would require a different experimental budget.

\subsection{Architectural baselines}
At $\lambda{=}0.8$ on MMNIST ($n{=}3$), soft consistency ($L_{20}{=}1.04{\pm}0.08$) matches spectral normalization ($1.05{\pm}0.16$) and beats AntisymmetricRNN ($5.22{\pm}0.38$), which over-constrains the dynamics (Figure~\ref{fig:baselines}, left).
Soft consistency is therefore competitive with a hard spectral constraint on this passive-video setting, without requiring a custom transition parameterization.
We do not claim superiority outside MMNIST; the point is that $\lambda$ is not a strictly weaker knob than spectral normalization for the $L_{20}$ proxy here.

\subsection{Exogenous digit}
Following the spirit of~\citet{pendharkar2026exogenous}, a digit with resampled random velocity is added and only $L_{20}$/$E_{20}$ are tracked.
At $\lambda{=}1.2$, exogenous $L_{20}{=}0.97{\pm}0.05$ and control $L_{20}{=}1.17{\pm}0.20$ ($n{=}3$).
No MSE/$L$ collapse appears.
That is not a refutation of exogenous discard in JEPAs; feature-retention probes are absent.

\subsection{WorldTest-style scoring}
In-domain MMNIST behavioral scores ($n{=}5$ both arms): $0.41{\pm}0.08$ at $\lambda{=}0$ to $1.06{\pm}0.09$ at $\lambda{=}0.8$ (paired $\Delta{+}0.64$).
Masked-frame prediction stays near chance; change detection drives the gain.
On a grid-world pixel adapter (OOD), scores move $1.88{\pm}0.35\rightarrow 1.17{\pm}0.52$ ($\Delta{-}0.70$): consistency helps in-domain CD and hurts the OOD adapter.
AutumnBench leaderboard evidence is not claimed.

\subsection{Latent MPC (negative sweet-spot)}
Random-shooting MPC on Pendulum ($H{=}10$, $K{=}32$, $20$ episodes, seeds $\{42,43,44\}$) with real $\lambda{=}0$ checkpoints:
mean returns $-713{\pm}62$ ($\lambda{=}0$), $-756{\pm}12$ ($\lambda{=}0.8$), $-768{\pm}23$ ($\lambda{=}1.2$).
Best mean at $\lambda{=}0$; $\lambda{=}0.8$ wins on $1/3$ seeds.
There is no planning sweet spot at the MMNIST $\lambda$ threshold.
That ranking differs from an earlier single-seed run that used a $\lambda{=}0.2$ proxy for the expansive regime; the multi-seed result with true $\lambda{=}0$ supersedes it.
See \S\ref{sec:discussion-asadi} for reconciliation with~\citet{asadi2018lipschitz}.

The negative MPC result is load-bearing for the paper's framing.
If $L{<}1$ on passive video were a sufficient condition for better latent planning on control, we would expect $\lambda{=}0.8$ to help on Pendulum.
It does not under this planner.
That keeps the diagnostic honest: geometry and planning utility are separable claims.

\begin{figure}[ht]
\centering
\includegraphics[width=0.45\linewidth]{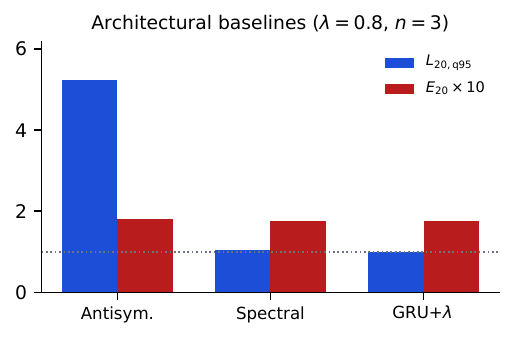}
\includegraphics[width=0.45\linewidth]{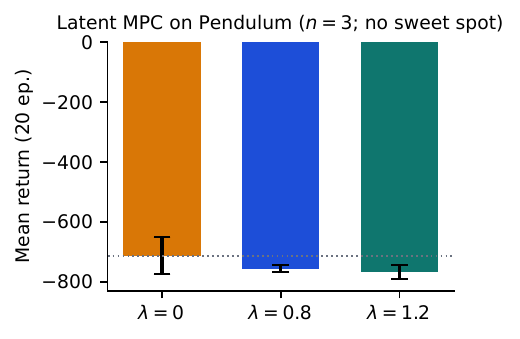}
\caption{Left: architectural baselines at $\lambda{=}0.8$. Right: Pendulum latent-MPC returns ($n{=}3$ seeds, $20$ episodes each).}
\label{fig:baselines}
\end{figure}
\FloatBarrier

\section{Discussion}
\label{sec:discussion}

The usable claim is narrow: soft consistency contracts passive-video geometry under the $L_{20}$ proxy, and that effect does not automatically transfer to action-conditioned or natural-video settings.

\subsection{Passive versus active contraction}
\label{sec:discussion-passive}

Moving-MNIST is \emph{passive}: transitions are learned from unlabeled video, actions are absent at train time, and the consistency term aligns teacher-forced multi-step latents on a low-dimensional digit manifold.
Raising $\lambda$ sharply moves $L_{20,\mathrm{q95}}$ toward $1$ and cuts $E_{20}$, four of six seeds cross $L{<}1$ at $\lambda{=}0.8$.
Pendulum, CartPole, and KTH are \emph{active} or \emph{natural-video} regimes: action channels (or richer appearance statistics) enter the transition, offline data are simulator rollouts or human motion clips, and the same $\lambda$ tightens spectra or error without population $L{<}1$.

We read this as a domain boundary, not a failed hyperparameter sweep.
Passive video offers a nearly deterministic latent flow on which multi-horizon agreement can act like implicit spectral regularization, pushing local $k$-step gain toward a non-expansive band (Proposition~\ref{prop:proxy}).
Once actions (or exogenous visual factors) inject directions that must be preserved for control or realism, the same loss still improves one-step or short-horizon fit in places but no longer collapses the $95$th percentile of $k$-step ratios below $1$.
\citet{pendharkar2026exogenous} warn that aggressive predictive objectives can discard stochastic degrees of freedom needed downstream; our action-conditioned negatives are consistent with that picture, though we do not run their retention probes.

Why might passive video cross and active control not?
Three mechanisms are plausible and not mutually exclusive:
\begin{enumerate}
\item \emph{Identifiability.} Passive digits lie on a low-curvature manifold where contraction and prediction align; control latents must encode torque-relevant state, enlarging local Jacobian variation.
\item \emph{Loss competition.} $\mathcal{L}_{\mathrm{cons}}$ can dominate passive training at high $\lambda$; on Pendulum/CartPole, near-zero $E_{20}$ already decouples error from $L_{20}$, so geometry and MSE need not move together.
\item \emph{Action routing.} Matched-action ablations leave $L_{20}\approx 1.7$--$1.9$ at $\lambda{=}0.8$ (Appendix~\ref{app:pendulum_mech}), suggesting action-conditioned Jacobians resist the same isotropic shrinkage that passive video admits.
\end{enumerate}
Section~\ref{sec:eta-law} tests (i)--(iii) via controllable $\eta$: a single linear law with bootstrap confidence bands places passive video near $\eta{=}0$ and control domains at higher $\eta_{\mathrm{eff}}$, reducing four disconnected anecdotes to one reported scaling relation.

\subsection{Reconciling planning sweet spots with \citet{asadi2018lipschitz}}
\label{sec:discussion-asadi}

\citet{asadi2018lipschitz} bound multi-step Wasserstein error and value error in terms of a model Lipschitz constant, then empirically cap weight norms by a hard bound $k$ while learning CartPole ($15{\times}10^{3}$ tuples) and Pendulum ($10^{4}$ tuples) transition models.
For each $k$, they select the median cross-validation model and train actor-critic / deterministic policy gradient inside the learned model, evaluating returns on the true environment.
Figure~6 of their paper shows an intermediate $k$ maximizing average return, a planning sweet spot on spectral gain during coupled model--policy learning.

Our Pendulum latent-MPC experiment is deliberately decoupled: $\lambda$ is varied only at train time, checkpoints are frozen, and a random-shooting planner searches action sequences in latent space ($H{=}10$, $K{=}32$, $n{=}3$ seeds).
Mean returns are best at $\lambda{=}0$ ($-713{\pm}62$) and worsen at $\lambda{=}0.8$ and $1.2$ ($-756{\pm}12$, $-768{\pm}23$); there is no sweet spot at the MMNIST contraction threshold.

These results are compatible rather than contradictory:
\begin{itemize}
\item Different knobs: Asadi control $k$, a hard per-layer Lipschitz cap on the transition network; we control $\lambda$, a soft penalty on multi-horizon latent agreement. Either can shrink local gain without tracking the other.
\item Different planning loops: their sweet spot arises when policy gradients exploit a model whose gain was tuned jointly with control; our negative result shows that a passive-video-style contraction point need not help a fixed latent planner on a control checkpoint.
\item Different readouts: they optimize environment return; we report an expansion proxy $L_{20,\mathrm{q95}}$ and a crude MPC score. Improved geometry on MMNIST does not license transferring $\lambda{=}0.8$ to Pendulum planning.
\end{itemize}
Takeaway for practitioners: spectral regularization can help planning when the regularizer, model, and planner are co-designed; a training-time $\lambda$ that crosses $L{<}1$ on passive video is neither necessary nor sufficient for latent MPC gains on the control domains tested here.

\subsection{Diagnostics, mediation, and open questions}

ATM and future-compatibility diagnostics~\citep{chen2026atm,ruan2026futurecompatible} screen \emph{trained} models for action-consistent rollouts; concurrent theory work enforces or analyzes contractivity via multi-token losses, memory gating, or locality bounds~\citep{liu2026lsemtp,kairos2026,diamondlol2026,vlwm2026}.
Our $\lambda$ sweep is a \emph{training-time} geometry curve on a fixed GRU backbone.
Use all three axes when auditing a world model: post-hoc action/future tests, structural contractivity theory, and a consistency ablation with $L_{20,\mathrm{q95}}$.

Associational mediation ($\hat{r}{=}0.94$ on MMNIST) supports a coherent story: $\lambda$ associates with lower $L_{20}$ and lower $E_{20}$, but $\lambda$ was not randomized, so causal language is avoided.
The passive/active split is partly unified by the stochastic-forcing law (\S\ref{sec:eta-law}): soft consistency is a domain-limited geometry diagnostic---useful on passive video at low $\eta_{\mathrm{eff}}$, and misleading if exported wholesale to control without remeasurement on the same curve.
Joint slices at $\lambda\in\{0.4,1.2\}$ (Appendix~\ref{app:eta-joint}) track the same qualitative rise in $\eta$, which argues against treating the primary $\lambda{=}0.8$ slope as a one-off.

\section{Limitations}
\label{sec:limitations}

Sample sizes are modest: $n{=}6$ at the critical MMNIST pair, $n{=}5$ on CartPole (all $\lambda$) and on Pendulum at $\lambda{=}0$, $n{=}4$ on Pendulum at $\lambda{=}0.4$, $n{=}3$ elsewhere unless noted, and $n{=}5$ seeds on the primary $\eta$ grid.
For the MMNIST paired $L_{20}$ drop ($4.96\to1.01$), the observed effect is large relative to within-seed variability (paired $t$ $p{=}0.005$), so the critical-pair comparison is not underpowered for detecting a large shift; smaller effects elsewhere would need more seeds.
$L_{20,\mathrm{q95}}$ is a local proxy whose validity conditions are in Appendix~\ref{app:proxy}; it is not $\sup\|J_f\|$.
Mediation is associational.
Benchmarks are Moving-MNIST, Pendulum-v1, CartPole-v1, KTH, and dm\_control cartpole-swingup (CPU sweep: $11/12$ cells locked; $\lambda{=}1.2$ missing seed~$44$).
WorldTest uses an AutumnBench-style scorer on our checkpoints, not the 43-environment suite.
MPC is random shooting only.
The exogenous test lacks retention probes.
Architecture is GRU-only.
The joint $(\lambda,\eta)$ boundary is complete at $\lambda\in\{0.4,1.2\}$ ($n{=}3$ per cell; Appendix~\ref{app:eta-joint}): both slices rise approximately linearly in $\eta$, with slopes comparable to the primary $\lambda{=}0.8$ law; we still do not fit a full continuous surface over $\lambda$.
Logged compute is on the order of $50$ GPU-hours plus multi-day CPU fill-ins (Appendix compute table).

\subsection{Threats to validity}
\paragraph{Internal.}
$\lambda$ is a training hyperparameter, not a randomized intervention, so $\lambda\to L\to E$ paths are associational.
$L_{20,\mathrm{q95}}$ depends on pair sampling and validation pools; Appendix~\ref{app:proxy} lists coverage conditions.
\paragraph{External.}
Passive-video contraction need not transfer to richer video (KTH already fails at the population level) or to Dreamer-scale agents with larger latents and coupled planners.
\paragraph{Construct.}
$L_{20}{<}1$ is a finite-difference non-expansiveness event, not a planning-quality certificate; our MPC negative result makes that distinction operational.

\section{Conclusion}
\label{sec:conclusion}

Raising multi-horizon consistency on Moving-MNIST produces a clear, seed-replicated drop in an expansion proxy and in long-horizon error.
The same knob does not deliver population contraction on the action-conditioned and natural-video settings we tested.
A controllable stochastic-forcing sweep places those regimes on one approximately linear $L_{20}(\eta)$ curve at fixed $\lambda{=}0.8$, and complete joint slices at $\lambda\in\{0.4,1.2\}$ show the same qualitative rise.
That is enough to treat $\lambda$ as a geometry diagnostic with known domain limits.
It is not a universal contraction law, and it is not a new world model.

\paragraph{Practical takeaway.}
For passive-video latents, log $L_{20,\mathrm{q95}}$ with MSE and treat $\lambda$ as a geometry dial with a measurable threshold near $1$.
For action-conditioned control, remeasure $L_{20}$ on that domain; do not export MMNIST's $\lambda{=}0.8$ as a planning recipe.
Across regimes, the forcing law $L_{20}\approx a+b\eta$---checked at three $\lambda$ values on the joint grid---is more portable than any single-domain crossing event.

\section{Open problems}
\label{sec:open}

These questions sit beyond the present evidence.

\paragraph{Causal identification.}
Our mediation analysis is associational because $\lambda$ is a hyperparameter, not a randomized intervention.
A design that randomly assigns $\lambda$ (or randomly injects $\eta$) within a fixed architecture would support stronger causal language about the $\lambda\to L\to E$ path.

\paragraph{Certified geometry.}
$L_{20,\mathrm{q95}}$ is a finite-sample chord statistic.
Connecting it to global Lipschitz certificates~\citep{newhouse2026lipschitz,miller2018stable} without destroying video prediction quality remains open.

\paragraph{Planner coupling.}
Asadi's sweet spot appears under co-trained model-based control; our negative MPC result uses a frozen random-shooting planner.
A systematic study that varies both $\lambda$ (or $k$) and planner coupling would clarify when geometric regularization helps returns.

\paragraph{Richer domains.}
dm\_control cartpole-swingup is reported as a recognized-control sanity check in the main text ($11/12$ cells; no population $L{<}1$).
Larger natural-video corpora and stochastic world models (RSSM-style) remain open tests of whether the $\eta_{\mathrm{eff}}$ placement rule survives beyond Gymnasium CartPole/Pendulum and KTH.

\paragraph{Joint $(\lambda,\eta)$ surface.}
We report a dense primary curve at $\lambda{=}0.8$ and complete joint slices at $\lambda\in\{0.4,1.2\}$ ($5\eta{\times}3$ seeds each).
Both slices track approximately linear $L_{20}(\eta)$ with slopes near the primary law ($\approx 1.7$--$2.0$), so a coarse $\lambda$-dependence of the forcing slope is not evident on this grid.
A denser surface over continuous $\lambda$ (and a formal test of slope$\times\lambda$ interaction) remains open.

\paragraph{Architecture sensitivity.}
All main results use a GRU residual transition.
Whether the MMNIST $L{<}1$ threshold and the $\eta$-law slope survive under Transformer, RSSM, or JEPA-style predictors is unknown.
Architectural baselines in the defensive section (antisymmetric / spectral) are narrow substitutes, not a full architecture sweep.

\section*{Author contributions}
Both authors contributed to experimental design, analysis, and writing.
K.B.\ led paper structure and theory exposition; A.J.\ led training pipelines, analysis scripts, and artifact packaging.

\section*{Reproducibility Statement}
Headline statistics come from locked CSVs under \texttt{results/master/}.
The package validator \texttt{reproduce\_paper.py} checks metric JSONs and figures without retraining.
Checkpoints are omitted from the preprint bundle for size.
Core locks include \texttt{NUMBER\_LOCK.json} and the stage CSVs.
Training code, analysis scripts, and Kaggle kernels will be released publicly with this preprint
(the arXiv source upload contains the \LaTeX{} paper and figures only).

\section*{Ethics Statement}
Experiments use synthetic video and standard Gymnasium simulators.
No human subjects or private data are involved.
$L_{20}$ must not be read as a certified stability certificate for deployment.

\bibliography{references}
\bibliographystyle{plainnat}

\appendix
\section{Proof of Theorem~\ref{thm:recurrence}}
\label{app:recurrence}

We prove the bound by induction on $h$.
Let $z_t$ denote a reference trajectory and $\hat{z}_t$ the learned rollout, with $e_t=\|\hat{z}_t-z_t\|$.
Assume $\|\hat{z}_{t+1}-f(\hat{z}_t)\|\le\epsilon$ and $\|f(x)-f(y)\|\le L\|x-y\|$ for all $x,y$ on the rollout manifold.

\paragraph{Base case $h=1$.}
\begin{align}
e_1
&=\|\hat{z}_1-z_1\|
=\|\hat{z}_1-f(z_0)+f(z_0)-z_1\| \nonumber\\
&\le \|\hat{z}_1-f(\hat{z}_0)\| + \|f(\hat{z}_0)-f(z_0)\|
\le \epsilon + L e_0. \label{eq:base}
\end{align}

\paragraph{Inductive step.}
Assume $e_t \le L^{t} e_0 + \epsilon\sum_{j=0}^{t-1} L^{j}$ for some $t\ge 1$.
Then the same triangle inequality as in \eqref{eq:base} gives
$e_{t+1}\le \epsilon + L e_t
= \epsilon + L^{t+1} e_0 + \epsilon\sum_{j=1}^{t} L^{j}$.
Adding $\epsilon$ to the geometric sum yields
$e_{t+1}\le L^{t+1} e_0 + \epsilon\sum_{j=0}^{t} L^{j}$.
Thus the claim holds for $t+1$.

\paragraph{Closed form.}
When $L\neq 1$, $\sum_{j=0}^{h-1}L^{j}=(1-L^{h})/(1-L)$, giving the stated bound.
When $L=1$, the sum equals $h$ and $e_h\le h\epsilon$.

\paragraph{Connection to prediction MSE.}
If $e_0=0$ and $\epsilon$ upper-bounds one-step model disturbance, the bound shows exponential amplification when $L>1$ and saturation when $L<1$, matching the use of $L_{20,\mathrm{q95}}$ as a practical monitor of $k$-step gain rather than a certified global constant.

\section{Validity of \texorpdfstring{$L_{20,\mathrm{q95}}$}{L20,q95} as a local expansion statistic}
\label{app:proxy}

\subsection{Estimand and sampling protocol}
Let $f:\mathcal{Z}\to\mathcal{Z}$ be the (possibly action-conditioned) one-step latent transition composed into $f^{(k)}$.
For a validation distribution $\mu$ on $\mathcal{Z}$, draw pairs $(Z_i,Z_j)$ i.i.d.\ from $\mu\times\mu$ and define
\[
R^{(k)}_{ij}=\frac{\|f^{(k)}(Z_i)-f^{(k)}(Z_j)\|}{\|Z_i-Z_j\|},
\qquad
Q_{0.95}(R^{(k)})=\inf\{r:\mathbb{P}(R^{(k)}_{ij}\le r)\ge 0.95\}.
\]
Our reported $L_{20,\mathrm{q95}}$ is the empirical $95$th percentile of $n_{\mathrm{pairs}}$ ratios (default $n_{\mathrm{pairs}}{=}240$, $k{=}20$, stress-split latents), implemented without gradients in \texttt{empirical\_kstep\_lipschitz}.

\subsection{Upper bound under global Lipschitzness}
\begin{lemma}[Composition bound]
\label{lem:composition}
If $f$ is $L$-Lipschitz, then $f^{(k)}$ is $L^{k}$-Lipschitz, and every ratio satisfies $R^{(k)}_{ij}\le L^{k}$.
\end{lemma}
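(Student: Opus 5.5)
The plan is to argue by induction on $k$ that $f^{(k)}$ is $L^{k}$-Lipschitz on the set where the rollouts live, and then to read off the ratio bound directly from the definition of $R^{(k)}_{ij}$.

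\emph{Base case and inductive step.} For $k=1$, $f^{(1)}=f$, which is $L$-Lipschitz by hypothesis. Suppose $f^{(k)}$ is $L^{k}$-Lipschitz. Write $f^{(k+1)}=f\circ f^{(k)}$. For any $x,y\in\mathcal{Z}$ we have
\[
\|f^{(k+1)}(x)-f^{(k+1)}(y)\|=\|f(f^{(k)}(x))-f(f^{(k)}(y))\|\le L\,\|f^{(k)}(x)-f^{(k)}(y)\|\le L\cdot L^{k}\|x-y\|.
\]
This closes the induction. The only point needing care is that $f^{(k)}(x)$ and $f^{(k)}(y)$ must lie in the domain where the Lipschitz bound of $f$ applies. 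Under global Lipschitzness on $\mathcal{Z}$ this is automatic, because $f:\mathcal{Z}\to\mathcal{Z}$ maps $\mathcal{Z}$ into itself.

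\emph{Ratio bound.} For any sampled pair with $Z_i\neq Z_j$, divide the $k$-step Lipschitz inequality by $\|Z_i-Z_j\|>0$; this gives $R^{(k)}_{ij}\le L^{k}$. Pairs with $Z_i=Z_j$ have an undefined ratio. I would state explicitly that such pairs are excluded: they occur with probability zero when $\mu$ is non-atomic, and the implementation discards them. Since every ratio is at most $L^{k}$, the population quantile $Q_{0.95}(R^{(k)})$ and the empirical $95$th percentile are also at most $L^{k}$, because quantiles are monotone and bounded by the supremum of the support.

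\emph{Action-conditioned case.} Here I would fix a common action sequence $a_{0:k-1}$ for both members of the pair, as the matched-action protocol requires. I would then apply the same induction to the composition $f_{a_{k-1}}\circ\cdots\circ f_{a_0}$, assuming each $f_{a}$ is $L$-Lipschitz in $z$ uniformly over $a$.

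The main obstacle is the domain issue. If one assumes Lipschitzness only on the support of $\mu$ (or on a ``rollout manifold''), the iterates may leave that set. The statement therefore requires either global Lipschitzness or forward invariance of the set, and I would state whichever hypothesis is used.
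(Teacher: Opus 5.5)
Your proof is correct and follows the paper's own argument: induction on $k$ via $f^{(k+1)}=f\circ f^{(k)}$, then dividing the $k$-step Lipschitz inequality by $\|Z_i-Z_j\|$ to bound every ratio (and hence the $95$th percentile) by $L^{k}$. Your remarks on forward invariance of the domain and the matched-action composition are sound refinements that the paper leaves implicit. One small correction: the paper's implementation clamps $\|z_i-z_j\|\ge 10^{-8}$ rather than discarding coincident pairs, but clamping can only lower a ratio, so the bound still holds.
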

\begin{proof}
Induct on $k$.
For $k{=}1$, $R^{(k)}_{ij}\le L$ by definition.
If $\|f^{(k)}(x)-f^{(k)}(y)\|\le L^{k}\|x-y\|$, then
$\|f^{(k+1)}(x)-f^{(k+1)}(y)\|=\|f(f^{(k)}(x))-f(f^{(k)}(y))\|\le L\cdot L^{k}\|x-y\|$.
Thus $R^{(k)}_{ij}\le L^{k}$ for all pairs, and $L_{20,\mathrm{q95}}\le L^{20}$ always.
\end{proof}
Lemma~\ref{lem:composition} clarifies what $L_{20,\mathrm{q95}}<1$ \emph{does not} imply: it is a statement about \emph{empirical $20$-step pair ratios}, not that the per-step Lipschitz constant $L$ is below $1$.
It \emph{does} imply that on $95\%$ of sampled pairs the $20$-step map is non-expansive in the finite-difference sense, a strictly empirical notion tied to the pair distribution.

\subsection{Local Jacobian interpretation}
When $f$ is $C^1$, define $J_f(z)=\partial f(z)/\partial z$.
For a fixed unit direction $v$ and small $\delta>0$, Taylor expansion gives
\[
\frac{\|f^{(k)}(z+\delta v)-f^{(k)}(z)\|}{\delta}
\longrightarrow
\bigl\|(J_f(z)^{k} v)\bigr\|
\quad\text{as }\delta\to 0,
\]
provided rollout actions (if any) are matched across the pair.
Thus, for small $\|Z_i-Z_j\|$, ratios track directional $k$-step gain of the composed Jacobian along the chord $Z_j-Z_i$.
The $95$th percentile over \emph{random chords} estimates a high quantile of local directional expansion on the validation manifold, not $\sup_z \rho(J_f(z)^{k})$.

\subsection{One-step spectral probe}
For a single latent $z_0$, power iteration on $J_f(z_0)$ estimates the dominant singular direction of the \emph{one-step} Jacobian.
Let $\rho_1(z_0)=\|J_f(z_0)\|_2$ (spectral norm).
Under uniform smoothness, $\rho_1(z_0)^{k}$ upper-bounds $\|(J_f(z_0)^{k}v)\|/\|v\|$ for all $v$.
Hence a drop in mean $\rho_1$ across $z_0$ when $\lambda$ increases is a \emph{necessary directional indicator} that some local gains have shrunk, while $L_{20,\mathrm{q95}}$ aggregates $k$-step finite differences over \emph{pairs} and can be more conservative or liberal depending on chord orientation and action sampling.

\subsection{Finite-sample guarantee}
Let $\hat{Q}_{0.95}$ be the sample $95$th percentile from $n$ i.i.d.\ ratios.
Standard order-statistics theory gives $\sqrt{n}(\hat{Q}_{0.95}-Q_{0.95}(R^{(k)}))\to\mathcal{N}(0,\sigma^2)$ under smooth distributions; in practice $n{=}240$ yields stable estimates across seeds but should be read with bootstrap bands when $n_{\mathrm{seeds}}$ is small.

\subsection{Coverage conditions (when the proxy is informative)}
$L_{20,\mathrm{q95}}$ tracks local Lipschitz behavior \emph{only if}:
\begin{enumerate}
\item \textbf{Support.} Validation latents cover the manifold used at test rollouts (we use a stress split).
\item \textbf{Chord scale.} Pairs are not dominated by near-duplicate points (ratios numerically unstable) nor by extremely long chords that leave the linearization regime; our implementation clamps $\|z_i-z_j\|\ge 10^{-8}$ and samples uniformly over the pool.
\item \textbf{Action alignment.} For action-conditioned rollouts, both trajectories must see the same action sequence (random, zero, or matched protocols in Appendix~\ref{app:pendulum_mech}); otherwise ratios mix transition geometry with action noise.
\item \textbf{Differentiability.} Jacobian interpretation requires $C^1$ transitions; ReLU/tanh GRU maps are piecewise smooth, so power-iteration probes are local rather than global certificates.
\end{enumerate}

\subsection{Empirical alignment with Jacobian probe}
Short (six-epoch) MMNIST runs at $\lambda\in\{0,0.8\}$ give:
$L_{20,\mathrm{q95}}=1.295\to1.069$ and mean one-step spectral radius $1.285\to1.018$ (\texttt{jacobian\_proxy.json}).
Both statistics move in the same direction when $\lambda$ rises, as predicted when training shrinks local Jacobian gain and $k$-step chord ratios inherit that shrinkage along most sampled directions.
This is evidence that $L_{20,\mathrm{q95}}$ \emph{tracks} local stability training signals; it is not a proof that $L_{20,\mathrm{q95}}<1$ certifies global contractivity.

\section{Hyperparameters}
\label{app:hyper}
\begin{table}[h]
\centering
\caption{Default hyperparameters (\texttt{src/config.py}).}
\small
\begin{tabular}{ll}
\toprule
Parameter & Value \\
\midrule
Latent dim $d$ & $64$ (scaling runs vary; control often $32$) \\
Hidden dim & $160$ (control often $128$) \\
Optimizer & AdamW, lr $2{\times}10^{-3}$, wd $10^{-6}$ \\
Grad clip & $1.0$ \\
MMNIST epochs / batch & $20$ / full train split \\
Pendulum epochs & $25$ (CPU extras: $20$) \\
CartPole epochs & $12$--$15$ \\
Consistency horizons & $\{1,3,5,10,15,20\}$ \\
Residual scale $\alpha$ & $0.5$ \\
Image size / seq len & $32{\times}32$ / $24$ (KTH seq $16$) \\
$L_{20}$ pairs / batches & $240$ / $4$--$25$ \\
Stochastic forcing $\sigma^\star$ & $0.5$ (primary $\eta$ sweep) \\
Seeds (default) & $42,43,44$ (+$45,46,47$ where noted) \\
\bottomrule
\end{tabular}
\end{table}

\paragraph{Compute notes.}
Primary MMNIST and $\eta$-law training ran on Kaggle GPU (~$11$--$12$ min/job median).
CPU fill-ins (CartPole extras, DMC swingup) use the same code paths with \texttt{REQUIRE\_CUDA=0}.
Wall times on CPU are typically $3$--$8\times$ longer per job.

\section{MMNIST critical-pair seed table}
\label{app:mmnist-seeds}
Paired $n{=}6$ at $\lambda\in\{0,0.8\}$ (seeds $\{42,\ldots,47\}$).
Values are $L_{20,\mathrm{q95}}$ and $E_{20}$ (stress horizon $20$).
\begin{center}
\small
\begin{tabular}{lcccc}
\toprule
seed & $L_{20}(\lambda{=}0)$ & $E_{20}(\lambda{=}0)$ & $L_{20}(\lambda{=}0.8)$ & $E_{20}(\lambda{=}0.8)$ \\
\midrule
42 & $2.45$ & $0.364$ & $1.03$ & $0.177$ \\
43 & $6.76$ & $0.365$ & $0.97$ & $0.177$ \\
44 & $5.85$ & $0.364$ & $1.12$ & $0.177$ \\
45 & $3.54$ & $0.365$ & $0.96$ & $0.177$ \\
46 & $3.68$ & $0.366$ & $0.96$ & $0.177$ \\
47 & $7.46$ & $0.366$ & $1.00$ & $0.176$ \\
\bottomrule
\end{tabular}
\end{center}
Four of six seeds sit below $L{=}1$ at $\lambda{=}0.8$ ($43,45,46,47$).

\section{Pendulum mechanism ablations}
\label{app:pendulum_mech}
At $\lambda{=}0.8$, $L_{20}$ under random, zero, and matched-action protocols is approximately $1.81$, $1.85$, and $1.73$ (single-checkpoint eval).

\section{Exogenous digit table}
\begin{center}
\small
\begin{tabular}{lccc}
\toprule
Condition & $\lambda$ & $L_{20}$ & $E_{20}$ \\
\midrule
control & $0.0$ & $8.09 \pm 6.44$ & $0.363$ \\
control & $1.2$ & $1.17 \pm 0.20$ & $0.177$ \\
exogenous & $0.0$ & $5.16 \pm 1.04$ & $0.359$ \\
exogenous & $1.2$ & $0.97 \pm 0.05$ & $0.171$ \\
\bottomrule
\end{tabular}
\end{center}

\section{Scaling summary}
$d{=}16$: $L_{20}{=}0.91{\pm}0.43$ at $\lambda{=}0$; $0.69{\pm}0.14$ at $\lambda{=}0.6$; $0.75{\pm}0.20$ at $\lambda{=}0.8$.
$d{=}32$: $3.47{\pm}0.70$ / $0.89{\pm}0.11$ / $1.01{\pm}0.02$ at $\lambda\in\{0,0.6,0.8\}$.
$d{=}128$: $4.96{\pm}3.05$ / $1.20{\pm}0.05$ / $1.14{\pm}0.04$.
$d{=}64$: Table~\ref{tab:mmnist}.

\section{Mediation bootstrap}
\label{app:mediation}
Run-resampled bootstrap ($B{=}2000$) on $27$ MMNIST runs.
Primary: $\hat{r}{=}0.94$, CI $[0.88,1.00]$.
Run-level sensitivity: $\hat{r}{\approx}0.80$, CI $[0.65,0.95]$.
Source: \texttt{statistical\_audit.json}.

\section{Joint \texorpdfstring{$(\lambda,\eta)$}{(lambda, eta)} boundary}
\label{app:eta-joint}
A secondary GPU arm varied $\lambda\in\{0.4,1.2\}$ with a coarser $\eta$ grid and three seeds ($30/30$ cells locked).
Figure~\ref{fig:eta_joint} overlays those slices on the primary $\lambda{=}0.8$ curve.
\begin{figure}[H]
\centering
\includegraphics[width=0.72\linewidth]{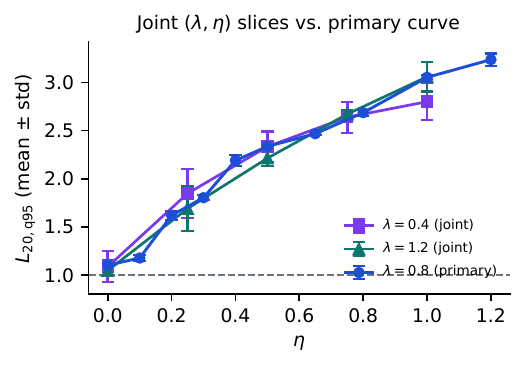}
\caption{Joint $(\lambda,\eta)$ slices at $\lambda\in\{0.4,1.2\}$ ($n{=}3$ per cell) against the primary $\lambda{=}0.8$ forcing curve ($n{=}5$). All three rise approximately linearly in $\eta$.}
\label{fig:eta_joint}
\end{figure}
Summary by $(\lambda,\eta)$:
\begin{center}
\small
\begin{tabular}{lccc}
\toprule
$\lambda$ / $\eta$ & $n$ & mean $L_{20}$ & mean $E_{20}$ \\
\midrule
$0.4$ / $0$ & $3$ & $1.09$ & $0.177$ \\
$0.4$ / $0.25$ & $3$ & $1.85$ & $0.177$ \\
$0.4$ / $0.5$ & $3$ & $2.33$ & $0.177$ \\
$0.4$ / $0.75$ & $3$ & $2.64$ & $0.177$ \\
$0.4$ / $1$ & $3$ & $2.80$ & $0.176$ \\
$1.2$ / $0$ & $3$ & $1.05$ & $0.177$ \\
$1.2$ / $0.25$ & $3$ & $1.69$ & $0.177$ \\
$1.2$ / $0.5$ & $3$ & $2.21$ & $0.177$ \\
$1.2$ / $0.75$ & $3$ & $2.67$ & $0.176$ \\
$1.2$ / $1$ & $3$ & $3.06$ & $0.177$ \\
\bottomrule
\end{tabular}
\end{center}
Both slices rise roughly linearly in $\eta$ (descriptive fits $\hat L_{20}\approx 1.30+1.69\eta$ at $\lambda{=}0.4$ and $\hat L_{20}\approx 1.13+2.00\eta$ at $\lambda{=}1.2$), consistent with the primary $\lambda{=}0.8$ law.
We do not fit a continuous surface over $\lambda$ from two slices.

\section{Statistical procedures}
\label{app:stats}
\paragraph{Paired tests.}
For MMNIST $\lambda\in\{0,0.8\}$ we pair by seed and apply a two-sided paired $t$-test and a Wilcoxon signed-rank test to $L_{20}$ and $E_{20}$.
We treat $p{<}0.05$ as the reporting threshold and quote exact $p$-values in the main text.

\paragraph{Mediation.}
Baron--Kenny path ratios use z-scored predictors on $\lambda$-averaged means for the primary point estimate, with a run-resampled bootstrap ($B{=}2000$) that redraws the $27$ runs, re-aggregates, and recomputes the ratio.
A sensitivity analysis repeats the paths on all run-level tuples without averaging.

\paragraph{$\eta$-law.}
Per-$\eta$ means use $n{=}5$ seeds with Student $t$ intervals.
The slope of $\hat L_{20}=a+b\eta$ uses a seed-block bootstrap: resample the five seed panels with replacement, refit, and take the $2.5$/$97.5$ percentiles of $b$.
Functional-form comparison (linear / threshold / power) is descriptive; we claim a law only when $R^2\ge 0.5$ and intervals exclude zero slope.

\section{s3 noisy-MNIST proxy}
\label{app:s3}
$L_{20}$ only ($E_{20}$ not logged), $n{=}3$:
$\lambda{=}0$: $1.25{\pm}0.09$; $\lambda{=}0.8$: $1.04{\pm}0.04$; $\lambda{=}1.2$: $1.04{\pm}0.01$.
\begin{figure}[h]
\centering
\includegraphics[width=0.45\linewidth]{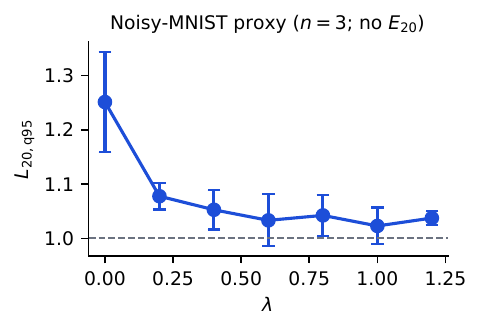}
\includegraphics[width=0.45\linewidth]{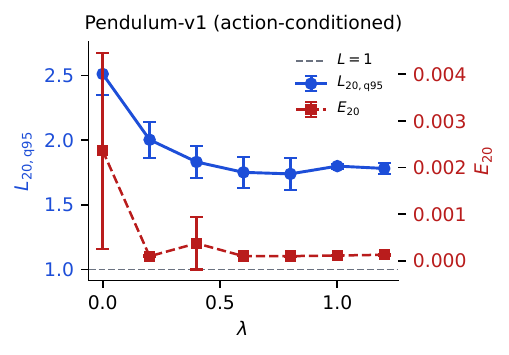}
\caption{s3 proxy ($L$ only) and Pendulum sweep.}
\label{fig:s3_appendix}
\end{figure}

\section{Jacobian probe}
\label{app:jacobian}
Short (6-epoch) MMNIST models at $\lambda\in\{0,0.8\}$; mean one-step Jacobian spectral radius via power iteration on $\partial z_{t+1}/\partial z_t$ (\texttt{scripts/jacobian\_proxy\_eval.py}).
At $\lambda{=}0$: $L_{20,\mathrm{q95}}{=}1.295$, mean spectral radius $1.285$.
At $\lambda{=}0.8$: $L_{20,\mathrm{q95}}{=}1.069$, mean spectral radius $1.018$.
Both drop with $\lambda$, consistent with Appendix~\ref{app:proxy}; neither certifies $\sup\|J_f\|<1$.
Source: \texttt{jacobian\_proxy.json}.

\section{CartPole table}
\label{app:cartpole}
\begin{center}
\small
\begin{tabular}{lcc}
\toprule
$\lambda$ & $L_{20}$ & $E_{20}$ \\
\midrule
$0.0$ ($n{=}5$) & $2.96{\pm}0.26$ & ${\sim}4{\times}10^{-4}$ \\
$0.4$ ($n{=}5$) & $3.18{\pm}0.24$ & ${\sim}1{\times}10^{-4}$ \\
$0.8$ ($n{=}5$) & $3.17{\pm}0.42$ & ${\sim}1{\times}10^{-4}$ \\
$1.2$ ($n{=}5$) & $2.99{\pm}0.20$ & ${\sim}1{\times}10^{-4}$ \\
\bottomrule
\end{tabular}
\end{center}
CPU seed bumps $\{45,46\}$ are locked at all four $\lambda$ values ($n{=}5$).

\section{WorldTest details}
MFP 24 tasks, CD 16 tasks; behavioral score $0$--$5$~\citep{warrier2025autumnbench}.
Balanced $n{=}5$ (seeds $\{42,\ldots,46\}$).
In-domain $\Delta{+}0.64$; grid OOD $\Delta{-}0.70$.

\section{Compute log}
\begin{center}
\small
\begin{tabular}{lrr}
\toprule
Stage & Jobs & Notes \\
\midrule
s1 MMNIST & $27$ & $7\lambda{\times}3$ plus critical-pair extras \\
s1 Pendulum & $21$ & $7\lambda{\times}3$ \\
s2 baselines & $6$ & antisymmetric + spectral \\
s2 exogenous & $12$ & $2{\times}2{\times}3$ \\
s3 proxy & $21$ & noisy MNIST \\
s4 scaling & $27$ & $d\in\{16,32,128\}$ \\
s5 CartPole & $12$ & $4\lambda{\times}3$ \\
s7 KTH & $12$ & $4\lambda{\times}3$ \\
s8 $\eta$-law primary & $50$ & $\lambda{=}0.8$, $10\eta{\times}5$ \\
s8 $\eta$-law joint & $30$ & $\lambda\in\{0.4,1.2\}$, $5\eta{\times}3$ \\
CPU DMC swingup & $11/12$ & cartpole-swingup; missing $\lambda{=}1.2$ seed~$44$ \\
CPU seed bump CartPole & $8/8$ & seeds $45$--$46$ $\to$ $n{=}5$ all $\lambda$ \\
CPU seed bump Pendulum & $3/8$ & $\lambda{=}0$ at $n{=}5$; rest partial \\
\midrule
Total logged & ${\sim}200{+}$ & ${\sim}50$ GPU-h + CPU marathon \\
\bottomrule
\end{tabular}
\end{center}

\section{Control-domain \texorpdfstring{$L_{20}$}{L20} bootstrap intervals}
\label{app:control-boot}
Seed-block bootstrap means for $L_{20,\mathrm{q95}}$ ($B{=}2000$) on merged CSVs (CartPole $n{=}5$ all $\lambda$; Pendulum $n{=}5$ at $\lambda{=}0$, $n{=}4$ at $\lambda{=}0.4$, $n{=}3$ elsewhere; KTH $n{=}3$).
\begin{center}
\small
\begin{tabular}{lccc}
\toprule
$\lambda$ & Pendulum & CartPole & KTH \\
\midrule
$0$ & $2.51$ [$2.39,2.64]$ & $2.96$ [$2.78,3.18]$ & $3.32$ [$1.73,6.28]$ \\
$0.4$ & $1.83$ [$1.73,1.92]$ & $3.18$ [$3.01,3.37]$ & $1.97$ [$1.13,3.57]$ \\
$0.8$ & $1.74$ [$1.63,1.87]$ & $3.17$ [$2.88,3.51]$ & $1.41$ [$1.12,1.83]$ \\
$1.2$ & $1.78$ [$1.74,1.82]$ & $2.99$ [$2.82,3.14]$ & $1.54$ [$1.15,2.30]$ \\
\bottomrule
\end{tabular}
\end{center}
All intervals lie above $1$; none of these domains show a population $L{<}1$ event on the locked grids.

\section{Reporting checklist for geometry diagnostics}
\label{app:checklist}
We recommend the following minimum report when claiming a contraction or expansion event for a latent world model:
\begin{enumerate}
\item \emph{Proxy definition.} Exact formula for the expansion statistic (here $L_{20,\mathrm{q95}}$), pair count, action protocol, and validation pool.
\item \emph{Seed table.} Per-seed values at the critical operating point, not only means.
\item \emph{Paired tests.} When claiming a $\lambda$ threshold, report paired tests on the same seeds.
\item \emph{Domain boundary.} At least one action-conditioned or natural-video negative, so $L{<}1$ is not over-exported.
\item \emph{Planning readout (optional but clarifying).} A frozen-checkpoint planner score, even if negative, to separate geometry from control utility.
\item \emph{Compute and incompleteness.} GPU hours and which appendix tables remain partial (DMC $11/12$; Pendulum seed extras).
\end{enumerate}
Our main text and appendices are written to satisfy (1)--(6) for the MMNIST critical pair and the $\eta$-law primary arm.

\section{Extended discussion of the \texorpdfstring{$L{=}1$}{L=1} reference line}
\label{app:L1}
The choice of $L{=}1$ as a reference line is motivated by Theorem~\ref{thm:recurrence}: under a global Lipschitz assumption, $L{<}1$ saturates disturbance accumulation while $L{>}1$ permits exponential growth.
In practice we never observe a global Lipschitz certificate; we observe a high quantile of finite-difference $k$-step ratios.
Crossing $L_{20,\mathrm{q95}}{<}1$ therefore means that \emph{on the measured validation pool}, $95\%$ of sampled chords are non-expansive under the $20$-step map.
It does \emph{not} mean:
\begin{itemize}
\item that every direction of the Jacobian spectrum is contractive;
\item that planning returns improve;
\item that the same $\lambda$ will produce $L{<}1$ on another domain;
\item that the model is safe for open-loop deployment.
\end{itemize}
We keep the reference line because it is interpretable, comparable across seeds, and aligned with the recurrence bound's qualitative split.
Alternative references (e.g., mean spectral radius $=1$, or a planner-calibrated threshold) are left as open measurement choices.

\section{Reproducibility map}
\label{app:repro}
Primary artifacts live under \texttt{results/master}.
Headline numbers are locked in \texttt{NUMBER\_LOCK.json}
and checked by \texttt{reproduce\_paper.py}.
Training entry points live under \texttt{src/};
analysis and packaging scripts live under \texttt{scripts/}.
CPU fill-in scripts write resume-safe CSVs.
The arXiv source bundle contains the compiled paper assets;
code and CSV locks are released separately with the public repository.

\end{document}